\documentclass{article}
\usepackage[preprint]{log_2026}				

\usepackage{booktabs}						
\usepackage{multirow}						
\usepackage{graphicx}						
\usepackage{svg}
\usepackage{wrapfig}						

\usepackage{amsmath,amssymb,amsthm,mathtools}
\usepackage{enumitem}

\newtheorem{theorem}{Theorem}
\newtheorem{corollary}[theorem]{Corollary}

\newcommand{\R}{\mathbb{R}}
\newcommand{\comm}[2]{[#1,#2]}

\makeatletter
\newcommand{\ifcamerareadyonly}[1]{\if@submission\else#1\fi}
\newcommand{\ifanonymousonly}[1]{\if@submission#1\fi}
\makeatother
\usepackage[numbers,compress,sort]{natbib}	

\title[Higher-Order Positional Encodings for Graph Representation Learning]{Higher-Order Positional Encodings for Graph Representation Learning}

\author[Stam et al.]{
Caleb Stam\\
University of California, Santa Barbara\\
\email{cstam@ucsb.edu}\And
Aagrim Hoysal\\
University of California, Santa Barbara \\
\email{aagrim@ucsb.edu}\And
Sanjukta Krishnagopal\\
University of California, Santa Barbara\\
\email{sanjukta@ucsb.edu}
}

\begin{document}

\maketitle

\begin{abstract}
Many real-world systems exhibit higher-order interactions among groups of entities that cannot be captured by pairwise relationships alone. Graph Transformers and Graph Neural Networks increasingly rely on positional encodings to enrich graph representations, yet existing positional encodings are computed solely from the original graph and therefore cannot directly capture observed higher-order interactions. Topological Deep Learning addresses this limitation by lifting graphs to simplicial complexes, but typically requires performing message passing or attention on higher-order neural network representations. We introduce a representation learning paradigm that enriches graph representations with higher-order topology through positional encodings, enabling standard graph learning models to exploit lifted incidence structure without modifying the backbone. We derive a theoretical characterization of the expressivity of higher-order positional encodings, proving that node-level operators induced by higher-order lifts can mix graph Laplacian frequencies in ways that scalar graph spectral filters cannot. Guided by this theory, we instantiate higher-order positional encodings using Hodge Laplacians derived from clique complexes. Experiments with Graph Transformers on ZINC and controlled synthetic benchmarks demonstrate improvements in predictive performance, while a fixed-1-skeleton experiment shows that the pipeline can transmit higher-order information when cells are supplied independently of the graph. Together, our results establish higher-order positional encodings as a principled bridge between graph positional encodings and topological deep learning.
\end{abstract}

\section{Introduction}

Relational learning models systems of interacting entities, such as molecules, social groups, or physical and biological networks. Many such systems exhibit interactions among more than two entities at once, and their behavior is governed by this higher-order structure rather than by pairwise relationships alone \citep{battiston2020networks, battiston2021physics, benson2016higher}. For instance, species coexistence in ecological communities depends on mechanisms that cannot be reduced to pairwise competition \citep{levine2017beyond}, contagion spreads through groups rather than along single edges \citep{iacopini2019simplicial, cencetti2021temporal}, and neural population activity is organized by simultaneous firing between multiple brain regions \citep{giusti2016two}. In each case, reducing multiway interactions to pairwise edges fundamentally changes the underlying dynamics. Capturing such phenomena therefore requires representations that encode multiway interactions directly, rather than approximating them through pairwise relationships. Higher-order representations provide a language for this problem by allowing relations among more than two entities.

Graph neural networks are the default architecture for relational data, with message-passing variants such as Graph Convolutional Networks (GCNs) \citep{Kipf:2017tc}, Graph Isomorphism Networks (GINs) \citep{xu2019powerful}, and Graph Attention Networks (GATs) \citep{velickovic2018graph} aggregating features over one-hop neighborhoods. Graph Transformers \citep{dwivedi2021generalization, ying2021transformers} replace local neighborhoods with global attention, typically augmented by positional or structural encodings (PSEs) such as Laplacian positional encodings (LapPE) or random-walk positional encodings (RWPE) \citep{rampavsek2022recipe}. Both the graph itself and the PSEs computed from it encode only pairwise edges. Consequently, genuinely higher-order interactions are not captured by these representations. 

Classical Topological Deep Learning (TDL) methods perform message passing or attention directly on lifted domains \citep{papillon2023architectures, ebli2020simplicial, bunch2020simplicial, bodnar2021weisfeiler}, requiring specialized higher-order neural architectures. More recently, several works have shown \cite{bernardez2025hopse, zhou2023facilitating, pellegrin2026higher, Choi:2024tigt} that positional encodings computed on higher-order domains can improve graph learning. However, these methods are often constrained to particular types of higher-order structures, or require the neural architecture to process the higher-order structure itself. 

Our work addresses this gap by developing an operator-theoretic framework for higher-order positional encodings and a general methodology for incorporating these encodings into graph-level architectures. Rather than treating lifted positional encodings as isolated constructions, we formulate them in terms of node-level operators induced by arbitrary higher-order lifts. We ask when a lift-induced operator can be reproduced by a scalar filter $g(L)$ of the original graph Laplacian. Lifted operators that mix Laplacian eigenmodes lie outside this class.

We demonstrate that this representational difference can translate into improved downstream learning on controlled synthetic datasets and ZINC. We additionally consider complexes with the same 1-skeleton but different filled faces, directly testing the ability to use higher-order information absent from the graph skeleton.

Our contributions are:
\begin{itemize}

\item We establish a theoretical characterization of when a lift-induced operator lies outside the scalar Laplacian-filter class and quantify its distance to that class.

\item We provide an architecture-independent lift--encode--project pipeline and demonstrate its utility on controlled synthetic and molecular benchmarks.

\end{itemize}

\section{Background}

\paragraph{Graph positional encodings.}
Positional and structural encodings (PSEs) augment node representations with structural information computed from the underlying graph without modifying the graph learning architecture. Widely used graph PSEs include Laplacian positional encodings (LapPE) and random-walk positional encodings (RWPE), which derive node representations from the graph Laplacian or random-walk statistics and are commonly used with both GNNs and Graph Transformers \citep{dwivedi2021generalization, ying2021transformers, rampavsek2022recipe}. Earlier approaches include anchor-based positional encodings, which embed nodes according to their distances from sampled anchor sets \citep{you2019position}. Throughout this work, we refer to positional encodings computed directly from the original graph as \emph{flat-graph positional encodings}.

\paragraph{Topological deep learning.}
Topological Deep Learning (TDL) extends graph learning to higher-order relational domains such as simplicial complexes, hypergraphs, cell complexes, and combinatorial complexes \citep{papillon2023architectures}. Rather than propagating information only along graph edges, these methods perform message passing or attention over incidence and boundary relations, enabling models to capture higher-order interactions explicitly \citep{ebli2020simplicial, bunch2020simplicial, bodnar2021weisfeiler, hajij2022topological}. Representative examples include simplicial neural networks \citep{ebli2020simplicial}, simplicial 2-complex convolutional networks \citep{bunch2020simplicial}, and combinatorial complex neural networks \citep{hajij2022topological}. The Cellular Transformer \citep{ballester2024attending} extends self- and cross-attention directly to cell-complex incidence relations, while TIGT \citep{Choi:2024tigt} combines topological positional information with topology-specific message passing and clique-based structure within the Transformer architecture. In both cases, higher-order topology is integrated into the model's computation rather than supplied solely as an architecture-independent positional representation. This added expressivity comes at a cost: each neighborhood relation in a higher-order complex contributes a GNN-like message-passing term $O(\|\mathcal{N}\|_0 F^2 + \mathrm{nrows}(\mathcal{N})\,d\,F)$ to a layer's complexity, and a full layer sums this over every relation in the lift rather than a single adjacency structure \citep{papillon2024topotune}.

\paragraph{Higher-order positional encodings.}
A more closely related recent line of work computes higher-order positional encodings as preprocessing while retaining standard backbones.
HOPSE \citep{bernardez2025hopse} constructs positional encodings from arbitrary combinatorial complexes via Hasse graphs. \citet{zhou2023facilitating} develop random-walk and Hodge-Laplacian positional encodings on clique complexes, while \citet{pellegrin2026higher} introduce analogous random-walk, Hodge-Laplacian, and curvature-based encodings on hypergraphs. These methods study particular families of higher-order positional encodings and establish their discriminative power relative to Weisfeiler-Lehman-style baselines; our work instead asks whether the operator induced by a lift can be reproduced by \emph{any} scalar spectral filter of the original graph. More specifically, \emph{what new representational geometry does lifting make available that cannot be obtained from the original graph?}

\section{Operator Theoretic Characterization of Higher-Order Positional Encodings}
\label{sec:theory}

Existing higher-order positional encodings are primarily analyzed through graph-discrimination or Weisfeiler--Leman expressivity: they establish that particular lifted encodings distinguish graphs that weaker graph representations cannot. Our goal is different. Rather than asking which graphs a lifted encoding distinguishes, we ask a representation-theoretic question:

\textbf{What class of positional representations is obtainable from the original graph, and when does lifting fundamentally expand that class?}

The algebraic result applies to any symmetric node-level operator. We apply it to lift-induced operators, but lift provenance is not used in the equivalence itself. The commutator is therefore a test of departure from scalar filtering of a chosen Laplacian, not a characterization of higher-order origin.

Laplacian, random-walk, and heat-kernel positional encodings all process the original graph.  Our encodings instead lift the graph to a higher-order object and diffuse through its incidence relations.  We therefore ask whether lifting merely selects another operator within the family of graph spectral representations, or whether it fundamentally enlarges the representational class available from the original graph.

\paragraph{Ordinary spectral filters.}
Let $G=(V,E)$ be an undirected graph with combinatorial Laplacian $L=D-A$.  Since $L$ is symmetric, it admits an orthonormal eigendecomposition \[ L=U\Lambda U^\top, \qquad U=[u_1,\ldots,u_n], \qquad \Lambda=\operatorname{diag}(\lambda_1,\ldots,\lambda_n). \] The vectors $u_i$ are the graph analogue of Fourier frequencies.  A scalar graph spectral filter is any operator of the form \[ g(L)=U\,\operatorname{diag}\bigl(g(\lambda_1),\ldots,g(\lambda_n)\bigr)U^\top. \] This family includes polynomial filters, heat kernels, random-walk diffusion operators, and learned scalar spectral responses. We denote this family by
\begin{equation}
    \mathcal S(L) = \{g(L)\},
\end{equation}
and refer to it as the scalar $L$-filter class. Every operator in $\mathcal S(L)$ is diagonal in the Laplacian eigenbasis. Consequently, it may reweight graph frequencies independently, but it cannot couple different frequencies.

\paragraph{A lifted operator.}
Let $M\in\R^{n\times n}$ be a symmetric node-level operator induced by a higher-order lift.  As a concrete example, let $C_1$ be the unsigned vertex--edge incidence matrix and $C_2$ the unsigned edge--triangle incidence matrix.  Then \begin{equation} M_\triangle=C_1C_2C_2^\top C_1^\top \label{eq:triangle-operator} \end{equation} maps a node signal along $V\rightarrow E\rightarrow T\rightarrow E\rightarrow V$, allowing interaction through shared filled triangles.

To compare $M$ with ordinary filters, express it in the Laplacian basis: \[ \widehat M=U^\top M U, \qquad \widehat M_{ij}=u_i^\top M u_j. \] An off-diagonal entry $\widehat M_{ij}$, $i\neq j$, means that input in $u_j$ contributes to output in $u_i$. We call this \emph{frequency mixing}. Frequency mixing is impossible for scalar $L$-filters, but is not unique to higher-order constructions: ordinary graph-derived operators can also fail to commute with $L$ on irregular graphs.

\begin{theorem}[Characterization relative to scalar Laplacian filters]
\label{thm:main}
Let $L=U\Lambda U^\top$ be the Laplacian of an undirected graph, and assume that its eigenvalues are distinct. Let $u_1,\ldots,u_n$ be its orthonormal eigenvectors, and let $M\in\R^{n\times n}$ be any symmetric node-level operator. Define the commutator $[L,M]=LM-ML$.

A scalar graph spectral filter has the form \[ g(L)=U\,\operatorname{diag}\bigl(g(\lambda_1),\ldots,g(\lambda_n)\bigr)U^\top. \] 
The theorem below completely characterizes when a lifted operator belongs to the flat-graph spectral representation class.
The lifted operator $M$ falls into exactly one of two cases.

\smallskip
\noindent\textbf{Case 1: $M$ does not mix graph frequencies.} If \[ u_i^\top M u_j=0\qquad\text{for every }i\neq j, \] then $M$ only rescales the eigenvectors of $L$. In this case $M=g(L)$ for some scalar $g$, equivalently $[L,M]=0$, so a scalar filter reproduces $M$ exactly.

\smallskip
\noindent\textbf{Case 2: $M$ mixes graph frequencies.} If \[ u_i^\top M u_j\neq0\qquad\text{for at least one pair }i\neq j, \] then $M$ sends part of one Laplacian eigenmode into another, and \[ [L,M]\neq0 \qquad\text{and}\qquad M\neq g(L)\quad\text{for every scalar function }g. \] The size of the part that no scalar filter can reproduce is exactly \begin{equation} \inf_g\|M-g(L)\|_F^2 =\sum_{i\neq j}|u_i^\top M u_j|^2. \label{eq:exact-distance-main} \end{equation} This distance is zero in Case~1 and strictly positive in Case~2.
\end{theorem}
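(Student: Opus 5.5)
The plan is to work entirely in the Laplacian eigenbasis and to use orthogonal invariance of the Frobenius norm. Since $U$ is orthogonal, for any $g$ we have
\[
\|M-g(L)\|_F^2=\|U^\top M U-\operatorname{diag}(g(\lambda_1),\ldots,g(\lambda_n))\|_F^2=\sum_{i\neq j}|\widehat M_{ij}|^2+\sum_{i}\bigl|\widehat M_{ii}-g(\lambda_i)\bigr|^2 .
\]
This already shows the lower bound $\inf_g\|M-g(L)\|_F^2\ge\sum_{i\neq j}|u_i^\top M u_j|^2$. The diagonal part is the only one depending on $g$, so the next step is to make it vanish.

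The distinct-eigenvalue hypothesis enters here. Because $\lambda_1,\ldots,\lambda_n$ are pairwise distinct, the prescription $g(\lambda_i):=\widehat M_{ii}=u_i^\top M u_i$ defines a genuine function on the spectrum. By Lagrange interpolation it can even be taken to be a polynomial of degree at most $n-1$. With this choice the second sum is zero, the infimum is attained, and it equals $\sum_{i\neq j}|u_i^\top M u_j|^2$, which is \eqref{eq:exact-distance-main}. I expect this step to be the main point needing care. If eigenvalues were repeated, $g$ could not assign different values on an eigenspace. Moreover, the off-diagonal entries $\widehat M_{ij}$ within a degenerate eigenspace would depend on the choice of basis. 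So the hypothesis is what makes both the formula and the basis-independence of the dichotomy valid.

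For the case split and the commutator, I would compute
\[
U^\top[L,M]U=\Lambda\widehat M-\widehat M\Lambda ,
\qquad\text{so}\qquad
\bigl(U^\top[L,M]U\bigr)_{ij}=(\lambda_i-\lambda_j)\widehat M_{ij}.
\]
Since $\lambda_i\neq\lambda_j$ for $i\neq j$, $[L,M]=0$ holds exactly when $\widehat M$ is diagonal. That is the condition of Case~1.

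In Case~1, $\widehat M=\operatorname{diag}(\widehat M_{ii})$, and the interpolating $g$ above gives $M=U\widehat M U^\top=g(L)$.

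In Case~2, some $\widehat M_{ij}\neq0$ with $i\neq j$. Then $[L,M]\neq0$, and the distance formula is strictly positive. Hence $M\neq g(L)$ for every $g$, because any equality would force the distance to be zero. Conversely, in Case~1 the distance is zero. Since the two cases are logical complements, $M$ falls into exactly one of them.

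Symmetry of $M$ is not really needed for the argument. It simply guarantees that $\widehat M$ is symmetric, so the off-diagonal sum counts each unordered pair twice, consistent with the stated formula.
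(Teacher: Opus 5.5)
Your proposal is correct and follows the paper's proof essentially step for step. Both use orthogonal invariance of the Frobenius norm to split $\|M-g(L)\|_F^2$ into diagonal and off-diagonal parts, Lagrange interpolation on the distinct eigenvalues to eliminate the diagonal part, and the identity $(U^\top[L,M]U)_{ij}=(\lambda_i-\lambda_j)\widehat M_{ij}$ to show that commutation is equivalent to $\widehat M$ being diagonal.
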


\begin{proof}[Proof sketch]

In the Laplacian basis, every scalar spectral operator is diagonal. The closest scalar filter therefore matches the diagonal entries of $U^\top M U$, leaving precisely the off-diagonal energy as the irreducible approximation error. Since

\[
[\Lambda,U^\top MU]_{ij}
=
(\lambda_i-\lambda_j)
u_i^\top M u_j,
\]

distinct eigenvalues make commutation equivalent to the absence of frequency mixing. Full details appear in Appendix~\ref{app:exact-projection}.

\end{proof}

Importantly, the simple-spectrum assumption is not essential to the separation result. With repeated eigenvalues, $[L,M]=0$ means that $M$ is block diagonal across distinct eigenspaces, while $M=g(L)$ additionally requires scalar action within every degenerate block. The exact projection replaces each block by its best scalar multiple of the identity; Appendix~\ref{app:repeated} gives the full derivation.

\paragraph{Interpretation.}

Theorem~\ref{thm:main} identifies the mechanism by which higher-order lifts expand the representational space of positional encodings.

Flat-graph spectral positional encodings manipulate graph frequencies independently: they can attenuate or amplify individual Laplacian eigenmodes but cannot create interactions between them. In contrast, lift-induced operators may couple different graph frequencies through higher-order incidence relations. Such operators therefore lie outside the entire flat-graph spectral representation class, regardless of the particular scalar filter employed.

Unlike previous expressivity analyses based on graph discrimination or Weisfeiler--Leman hierarchies, Theorem~\ref{thm:main} characterizes the representational geometry induced by lifting. Rather than asking whether two graphs become distinguishable, it asks whether the positional representation itself can be realized by \emph{any} scalar spectral operator on the original graph.


Thus Case~2 cannot be recovered by changing the polynomial degree, diffusion time, heat-kernel scale, or learned scalar response applied to the same $L$. Equation~\eqref{eq:exact-distance-main} quantifies what scalar $L$-filters leave unmatched.

\begin{figure}
    \centering
    \includegraphics[width=0.99\linewidth]{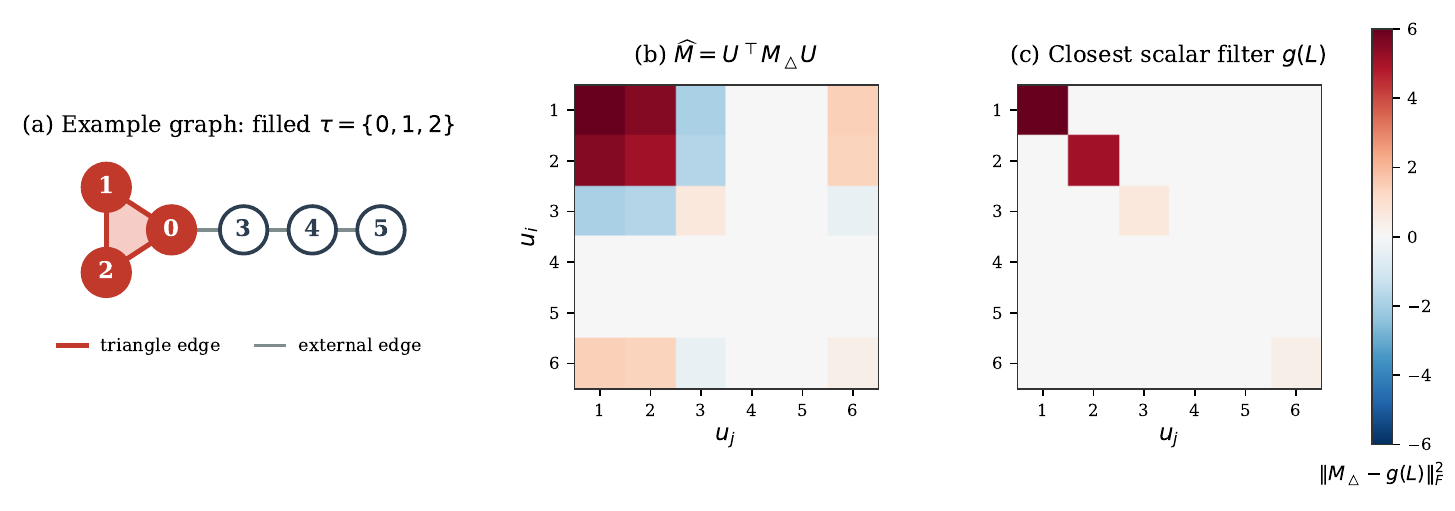}
    \caption{(a) An example graph corresponding to Corollary~\ref{cor:triangle}: filled triangle $\tau=\{0,1,2\}$, with node 0 adjacent to an external vertex. (b) $\widehat M = U^\top M_\triangle U$, the lifted operator in the Laplacian eigenbasis; off-diagonal entries are the frequency-mixing terms no scalar filter can reproduce. (c) The closest achievable scalar filter, $g(L)=\operatorname{diag}(\widehat M_{ii})$, in the same eigenbasis, sharing (b)'s color scale. The operator is diagonal; it recovers none of (b)'s off-diagonal mass.}
    \label{fig:theory_mixing}
\end{figure}

\begin{corollary}[Triangle-mediated diffusion lies outside the flat-graph spectral class]
\label{cor:triangle}

Let $G$ contain exactly one filled triangle $\tau$, with at least one triangle vertex adjacent to a vertex outside $\tau$.

Define

\[
M_\triangle
=
C_1C_2C_2^\top C_1^\top.
\]

Then
\[
LM_\triangle \neq M_\triangle L
\;\Longrightarrow\;
M_\triangle \notin \mathcal S(L).
\]

Consequently, even the simplest nontrivial higher-order lift induces frequency mixing that cannot be reproduced by any scalar spectral operator on the original graph.

\end{corollary}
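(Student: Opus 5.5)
The implication itself is immediate. Every $g(L)\in\mathcal S(L)$ is a function of $L$, so $Lg(L)=U\Lambda\,\mathrm{diag}(g(\lambda_i))U^\top=U\,\mathrm{diag}(g(\lambda_i))\Lambda U^\top=g(L)L$. Hence $M_\triangle\in\mathcal S(L)$ forces $[L,M_\triangle]=0$, and the contrapositive gives the stated implication. This direction needs no simple-spectrum hypothesis.

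The substantive content, the word ``Consequently'', is that under the hypotheses the commutator really is nonzero. I plan to establish this by direct computation.

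First I compute $M_\triangle$ explicitly. With exactly one filled triangle $\tau=\{a,b,c\}$, $C_2$ is a single column: the indicator of the three edges $ab,bc,ca$. Applying $C_1$ to this edge indicator gives a vertex vector in which each triangle vertex lies on two of the three edges, so $C_1C_2=2\,\mathbf 1_\tau$. Therefore
\[
M_\triangle = 4\,\mathbf 1_\tau \mathbf 1_\tau^\top ,
\]
a rank-one operator supported on $\tau$.

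Next I test commutation. Since $M_\triangle$ is symmetric, $[L,M_\triangle]=4\bigl(L\mathbf 1_\tau\mathbf 1_\tau^\top-\mathbf 1_\tau (L\mathbf 1_\tau)^\top\bigr)$. This vanishes iff $L\mathbf 1_\tau$ is parallel to $\mathbf 1_\tau$. To see this, note that $xy^\top=yx^\top$ with $y\neq0$ forces $x\in\mathrm{span}(y)$. Now compute $L\mathbf 1_\tau$ at a vertex $v$. If $v\notin\tau$, the entry is $-|N(v)\cap\tau|$. If $v\in\tau$, the entry is $\deg(v)-2$, the number of neighbours of $v$ outside $\tau$.

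By hypothesis some triangle vertex $a$ has an outside neighbour $w$. Then $(L\mathbf 1_\tau)_w\le -1\neq 0$, while $(\mathbf 1_\tau)_w=0$. So $L\mathbf 1_\tau\notin\mathrm{span}(\mathbf 1_\tau)$, and hence $[L,M_\triangle]\neq0$. Combined with the first paragraph, $M_\triangle\notin\mathcal S(L)$.

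If the graph also has simple spectrum, Theorem~\ref{thm:main}, Case~2, further yields nonzero frequency mixing and a strictly positive distance in~\eqref{eq:exact-distance-main}.

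The only step requiring care is the factor computation $C_1C_2=2\mathbf 1_\tau$ for unsigned incidences. The key point is that the external-neighbour entry of $L\mathbf 1_\tau$ is nonzero while the corresponding entry of $\mathbf 1_\tau$ is zero. Once that is seen, the argument is elementary.
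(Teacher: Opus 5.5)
Your proposal is correct and follows essentially the same route as the paper: you derive the rank-one identity $M_\triangle=4\,\mathbf 1_\tau\mathbf 1_\tau^\top$, use the fact that $[L,zz^\top]=0$ exactly when $z$ is an eigenvector of $L$, and observe that $L\mathbf 1_\tau$ has a negative entry at an outside neighbour. The one small difference is that you prove the implication directly from $L\,g(L)=g(L)\,L$. This avoids the simple-spectrum hypothesis of Theorem~\ref{thm:main}, which the paper's argument cites and then separately covers in its repeated-eigenvalue appendix.
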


\begin{proof}[Proof sketch]
For the triangle indicator $z_\tau$,

\[
M_\triangle
=
4z_\tau z_\tau^\top.
\]

The external edge implies that $z_\tau$ is not an eigenvector of $L$, yielding

\[
[L,M_\triangle]\neq0.
\]

The result then follows directly from Theorem~\ref{thm:main}. Full details appear in Appendix~\ref{app:triangle-proof}.

\end{proof}

\subsection{Design Principles for Higher-Order Positional Encodings}

Theorem~\ref{thm:main} provides more than a characterization of one particular positional encoding: it identifies a general criterion for when a higher-order lift expands the class of graph positional representations. The key question is not whether an encoding is derived from a simplicial complex, hypergraph, or another higher-order object, but whether the induced node-level operator introduces propagation geometry unavailable to scalar spectral filters on the original graph.

The theorem therefore suggests a simple design principle. Given any higher-order lift that induces a symmetric node-level operator $M$, one may ask whether
\[
[L,M]=0.
\]
If the commutator vanishes, then $M$ is equivalent to a scalar spectral filter on the original graph and introduces no new representational capacity. If $[L,M]\neq0$, then the lift necessarily enlarges the graph spectral representation class by introducing frequency mixing that no scalar graph spectral filter can reproduce.

Equation~\eqref{eq:exact-distance-main} also provides a quantitative measure of this separation: the Frobenius distance to the nearest scalar spectral filter is exactly the energy contained in the frequency-mixing terms. This offers a natural way to compare different higher-order lifts according to how far they depart from flat-graph spectral representations.

\paragraph{Generalizability.} Importantly, this characterization is independent of the particular lifting construction. It applies to any symmetric node-level operator induced by simplicial, cellular, hypergraph, combinatorial-complex, motif-based, or related higher-order lifts. The Hodge Laplacian studied in our work is therefore one representative instantiation of a broader theoretical framework rather than the unique setting in which the results apply.

\paragraph{When are higher-order PSEs useful.} 
Finally, Theorem~\ref{thm:main} is a statement about representational capacity rather than predictive accuracy. Whether the additional representational power improves learning depends on the task and the higher-order structure exposed by the lift. We therefore expect higher-order positional encodings to be most beneficial when prediction depends on genuinely higher-order interactions that cannot be recovered from pairwise structure alone. In the next section, we evaluate this hypothesis empirically using Hodge-Laplacian positional encodings on clique complexes as one representative instance of the general framework.

\section{Experiments}

The previous section established a general characterization of when higher-order lifts enlarge the class of graph positional representations. We now investigate whether this additional representational capacity translates into improved learning, instantiating the framework with the Hodge Laplacian on the clique complex.

Each of our experiments follows the same three-stage pipeline. First, a pairwise graph input is lifted to a higher-order domain; we use a clique lifting, representing the graph as a simplicial complex by treating 3-cliques as 2-simplices, so that the resulting clique complex explicitly encodes each triangle as a filled face rather than as three separate edges. Second, we compute a Hodge Laplacian positional encoding on the lifted object, described in detail below. Finally, the resulting features are passed to an unmodified backbone. Lifting and encoding are always a preprocessing step: no experiment runs message passing or attention directly on the lifted domain. The pipeline itself is agnostic to the choice of lift, encoding, and backbone; we instantiate it here with the clique complex, Hodge Laplacian encoding, and a Graph Transformer. A visualization of this architecture is shown in Figure \ref{fig:architecture_overview} and precise experimental details for each test can be found in Appendix~\ref{app:experimental-details}.

\begin{figure}
    \centering
    \includegraphics[width=0.95\linewidth]{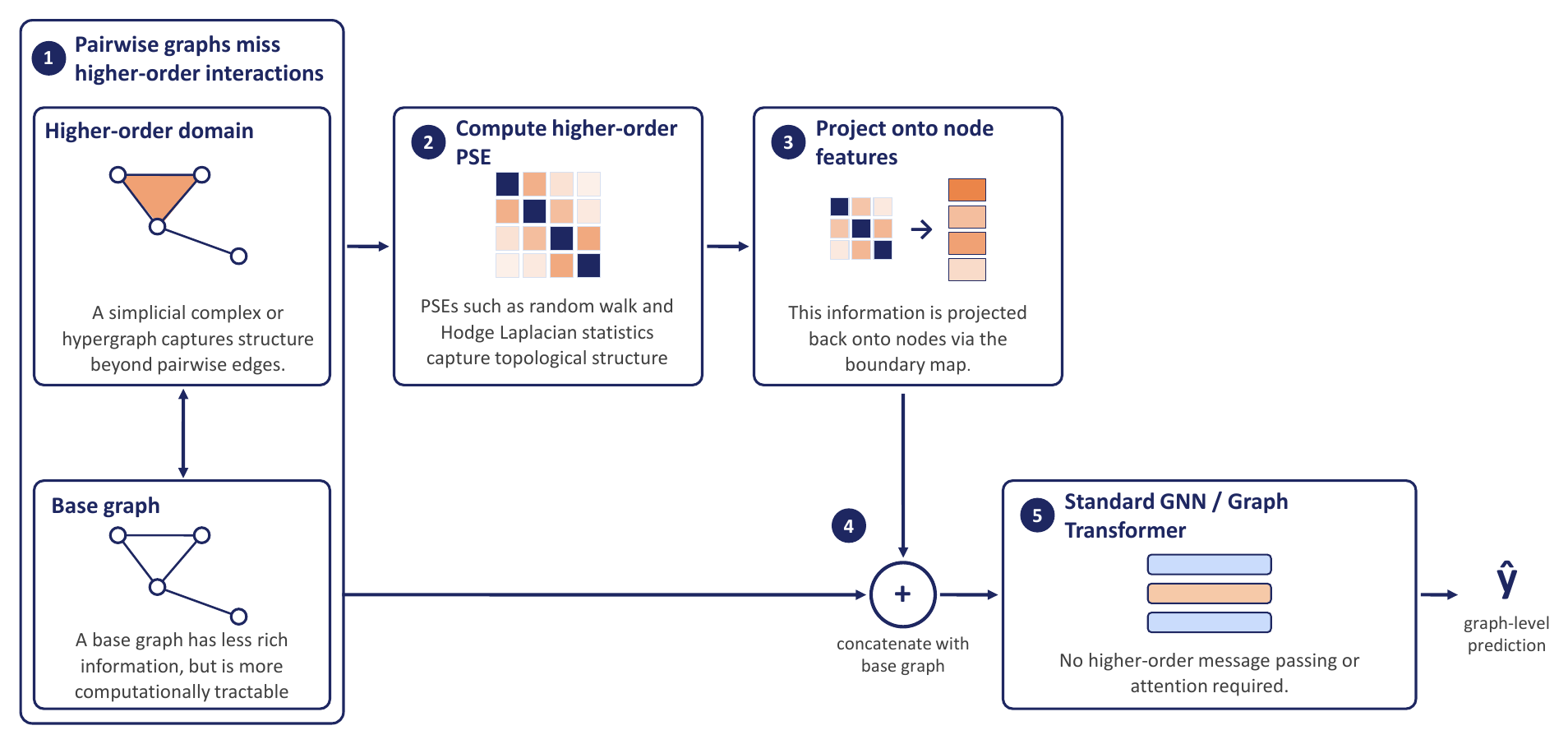}
    \caption{Overview of the proposed architecture. Positional and structural encodings are computed in the higher-order domain before being projected to the node level and passed to a standard GNN or Graph Transformer.}
    \label{fig:architecture_overview}
\end{figure}

\paragraph{Hodge Laplacian.} The combinatorial Hodge Laplacian generalizes the graph Laplacian to simplices of any dimension $k$: writing $B_k$ for the signed incidence matrix between $(k-1)$- and $k$-simplices, $L_k = B_k^\top B_k + B_{k+1}B_{k+1}^\top$ couples $k$-simplices sharing a face with those sharing a coface \citep{horak2013spectra}. The standard graph Laplacian is the $k=0$ case, while the encoding used throughout our work is the $k=1$ case, the edge Hodge Laplacian $L_1 = B_1^\top B_1 + B_2 B_2^\top$. Here, the up term $B_2B_2^\top$ is built directly from the triangle-incidence structure a clique-complex lift exposes. We choose it because its spectrum is known to carry structure invisible to the ordinary graph Laplacian such as simplicial community structure \citep{krishnagopal2021spectral}. It also shares the graph Laplacian's algebraic structure, so just as the graph Laplacian's eigenvectors underpin random-walk diffusion on nodes, $L_1$'s non-harmonic eigenvectors admit an analogous diffusion interpretation over edge space \citep{schaub2020random}.

For each selected normalized edge-Hodge eigenvector $q_j\in\mathbb{R}^{|E|}$, the implemented node feature is
\[
z_j(v)=\sum_{e\ni v}|q_j(e)|.
\]
The absolute value makes the aggregation invariant to eigenvector sign and avoids the cancellation suffered by a signed boundary projection: upper-Hodge components lie in $\operatorname{im}(B_2)$ and $B_1B_2=0$. This finite nonlinear PSE is motivated by lifted diffusion but is not algebraically identical to the linear node operator $M_\triangle$ analyzed in Theorem~\ref{thm:main}.

For a clique-complex lift, candidate 2-simplices are determined by the graph, so this construction reorganizes graph information into an edge--triangle incidence representation; it does not add information-theoretically new data. Independently supplied filled cells, by contrast, can contain information absent from the 1-skeleton.

While we choose the edge Hodge Laplacian in our experiments, the theory admits a broader class of encodings. The same incidence-and-boundary recipe extends to $L_k$ for $k>1$, coupling higher simplices through shared faces and cofaces, and more generally to signals defined over cells of any order \citep{barbarossa2020topological}. A 2-Hodge Laplacian, for instance, would couple filled triangles through shared tetrahedra, exposing structure our $L_1$ encoding cannot see. The same recipe, incidence and boundary operators composed into a symmetric cell-level operator, also extends beyond simplicial complexes to hypergraph incidence matrices and cellular boundary maps, and Theorem~\ref{thm:main}'s commutator test applies to any of them unchanged. The Hodge Laplacian studied here is therefore just one well-motivated point in a much larger design space.

\subsection{Controlled synthetic experiments}

Our synthetic graph families are designed to isolate higher-order structure while controlling pairwise statistics. They therefore provide a direct test of the practical value of the additional representational capacity characterized by Theorem~\ref{thm:main}.

\paragraph{Methods.} Our synthetic dataset consists of graphs generated to share a degree sequence while their higher-order structure varies. A base Erd\H{o}s--R\'enyi graph $G(n,p)$ is repeatedly perturbed by degree-preserving double-edge swaps: each swap removes two edges $(a,b)$ and $(c,d)$ and reconnects them as $(a,c)$ and $(b,d)$ (or the alternate pairing), subject to no multi-edges, leaving the degree sequence exactly unchanged \citep{maslov2002specificity}. A new graph is emitted at a fixed swap interval, producing a family that shares one degree sequence but spans a range of triangle densities. Because degree sequence alone cannot distinguish graphs within a family, a model that only aggregates degree-like information should perform at the mean-predictor level; any gap opened by an encoding is attributable to the higher-order structure it captures.

The regression target probes higher-order structure directly. Here, we regress against the number of 4-cliques in the graph. This quantity is not a function of the degree sequence: two graphs can share an identical degree sequence while differing arbitrarily in clique count, so an encoding that only aggregates degree-like information has in principle no basis for distinguishing labels within a family. We target 4-cliques rather than triangles specifically because triangle count is nearly built into the Hodge encoding by construction for triangle count $t$. Detecting a 4-clique, by contrast, requires recognizing coordinated overlap among triangles rather than merely tallying them, a harder demand on the encoding's structure. 

\paragraph{Results}

\begin{wraptable}{r}{0.42\textwidth}
\centering
\begin{tabular}{lc}
\toprule
Encoding & 4-clique count MAE \\
\midrule
LapPE & $6.274\pm0.001$ \\
RWPE & $6.271\pm0.008$ \\
HOPSE-M-C & $5.813\pm0.136$ \\
Hodge & $\mathbf{3.906\pm0.053}$ \\
\bottomrule
\end{tabular}
\caption{Validation MAE (mean $\pm$ std over 5 trials) for four-clique counting under degree-sequence control, $n\in[18,22]$.}
\label{tab:cliques}
\end{wraptable}

Results are shown in Table~\ref{tab:cliques}. LapPE and RWPE perform almost identically, while HOPSE-M-C, the manual, connectivity-only variant of HOPSE \citep{bernardez2025hopse}, provides a modest improvement over both. Hodge achieves substantially lower error than all three alternatives and is stable across the five trials. This supports the practical value of the edge--triangle incidence structure encoded by the Hodge PSE for recognizing coordinated overlap among triangles.

\subsection{Randomized lift experiments}
If the improvements arise from meaningful higher-order structure rather than simply a richer embedding, destroying the higher-order topology while preserving the encoding pipeline should reduce performance.

\paragraph{Methods} Here, we use the Hodge encoding to demonstrate the model's ability to leverage structural content rather than sheer encoding capacity. To do so, we build a size-matched randomized control (HodgeRandom) by replacing the true boundary matrix $B_2$ with a randomly generated matrix of the same shape, preserving the number of triangles while destroying the per-edge triangle participation structure that $B_2$ encodes. Both methods expose the total triangle count $t$ through $\operatorname{tr}(B_2B_2^\top)=3t$, so tasks correlated with $t$ alone cannot separate them. To isolate the per-edge triangle-participation structure that only the real Hodge encodes, we define a task based on the fraction of triangle-participating edges that participate in exactly one triangle: for edge $(u,v)$ with triangle count $|N(u)\cap N(v)|$, the numerator counts edges with triangle count $1$ and the denominator counts edges with triangle count $\geq 1$. Normalizing by the triangle-participating edge count removes the direct dependence on $t$, so the task requires per-edge boundary information rather than a triangle tally. The graph regime matches the 4-clique task. We compare No PSE, LapPE, RWPE, Hodge, HodgeRandom, and connectivity-only HOPSE-M-C.

\begin{wraptable}{r}{0.4\textwidth}
\centering
\scriptsize
\begin{tabular}{lc}
\toprule
Encoding & MAE \\
\midrule
No PSE & $0.115 \pm 0.001$ \\
LapPE & $0.128\pm0.001$ \\
RWPE & $0.086\pm0.025$ \\
HOPSE-M-C & $0.111\pm0.003$ \\
HodgeRandom & $0.094 \pm 0.018$ \\
Hodge & $\mathbf{0.066\pm0.001}$ \\
\bottomrule
\end{tabular}
\caption{Validation MAE (mean $\pm$ std over 5 trials) for the single-triangle-edge fraction task, $n\in[18,22]$.}
\label{tab:hodge-ablation}
\end{wraptable}

\paragraph{Results} Results are shown in Table~\ref{tab:hodge-ablation}. Hodge achieves the lowest error across the five trials, outperforming HodgeRandom, HOPSE-M-C, LapPE, and RWPE. This confirms that the real Hodge PSE encodes structurally meaningful per-edge triangle participation that the randomized control cannot recover, consistent with Theorem~\ref{thm:main}'s distinction between frequency-mixing content and any operator that merely matches a scalar summary statistic. The residual improvement of HodgeRandom over no positional encoding likely reflects indirect correlations with triangle count that survive normalization imperfectly, rather than any structural content.

\subsection{Molecular dataset experiments}
Finally, we investigate whether the representational advantages identified theoretically and observed on controlled synthetic graphs translate to a practical molecular prediction task.

\paragraph{Methods} The ZINC dataset experiment \citep{irwin2012zinc} is a graph-level task that predicts the penalized logP (the molecule's water-octanol partition coefficient minus its synthetic accessibility score and the number of cycles of more than six atoms) from molecular graphs, using the standard train/validation/test split of Benchmarking Graph Neural Networks \citep{dwivedi2022benchmarking}. Penalized logP is used to train molecular generation models, as in \citet{pmlr-v80-jin18a, pmlr-v70-kusner17a}.
Each node represents a heavy atom, with node features one-hot encoded by atom type. We evaluate both the original setting without edge features and a setting with standard bond features. The 12{,}000 molecular graphs have a mean node count of 23.2 and edge count of 49.8. We use MAE of the regressed versus actual penalized logP as our performance metric.

Since cycle count is part of the calculation for penalized logP, we test the ability of lifted positional encodings to distinguish larger graph structures than flat-graph positional encodings.

\begin{wraptable}{r}{0.52\textwidth}
\centering
\scriptsize
\begin{tabular}{lcc}
\toprule
Encoding & No bond features & Bond features \\
\midrule
No PSE & $0.694\pm0.002$ & $0.556\pm0.041$ \\
LapPE & $0.683\pm0.010$ & $0.637\pm0.016$ \\
RWPE & $0.686\pm0.003$ & $0.585\pm0.014$ \\
Full Hodge & $0.534\pm0.014$ & $0.473\pm0.024$ \\
Lower Hodge & $0.523\pm0.018$ & $\mathbf{0.405\pm0.030}$ \\
\bottomrule
\end{tabular}
\caption{ZINC test MAE (mean $\pm$ standard deviation) with and without standard bond features.}
\label{tab:zinc}
\end{wraptable}

\paragraph{Results} Results are shown in Table~\ref{tab:zinc}. Results shown are averaged over five trials. We compare the full edge Hodge Laplacian with its lower component $B_1^\top B_1$ to distinguish the effect of edge-space spectral information from that of the triangle-dependent upper component $B_2B_2^\top$.

Full and Lower Hodge outperform No PSE, node LapPE, and RWPE both with and without incorporating bond features. Lower Hodge performs best, identifying edge-space spectral lifting rather than the triangle upper term as the main source of the ZINC gain. This is consistent with the triangle statistics of the dataset: 94.0/92.5/94.3\% of train/validation/test molecules contain no triangles, only 0.3/0.7/0.1\% contain at least two, and the corresponding mean counts are 0.06/0.08/0.06. Thus most examples provide no $B_2$ signal, and the ZINC results primarily support the value of an edge-space inductive bias.

\section{Conclusion}

We introduced an operator-theoretic framework and a lift--encode--project construction for supplying higher-order positional information to standard graph-learning backbones. The commutator criterion characterizes when a lift-induced operator mixes graph Laplacian eigenspaces beyond the reach of scalar spectral filters, while the projection formula quantifies this separation. Controlled experiments show the value of lifted incidence structure, and the fixed-skeleton setting demonstrates that the pipeline can transmit higher-order information supplied independently of the graph. On ZINC, the Lower-Hodge result further shows that edge-space spectral structure can remain useful when triangle incidence is sparse. Together, these findings establish lifted positional encodings as a flexible bridge between higher-order representations and conventional GNNs or Graph Transformers.

\paragraph{Limitations.} Our empirical study is a proof of concept centered on one lift and encoding: the clique complex and Hodge Laplacian. Performance depends on whether the selected lift exposes structure relevant to the task; a poorly matched lift may add cost without improving prediction. In addition, constructing dense complexes and computing their spectra can make preprocessing expensive, even though the resulting encodings can be cached and used with an unchanged backbone.

\paragraph{Future work.} A systematic empirical evaluation across datasets, higher-order domains, positional encodings, and backbone architectures is an important next step. This includes studying hypergraphs, cellular and combinatorial complexes, and data in which higher-order relations are observed directly rather than inferred through a lift. It would also be useful to compare preprocessing-based encodings with architectures that operate directly on the lifted domain, and to develop sparse or partial spectral methods that improve scalability on larger complexes.

\paragraph{Code availability.}
Code is available at \url{https://github.com/ahoysal/lifted-pses}.

\bibliographystyle{unsrtnat}
\bibliography{reference}

\clearpage
\appendix
\section{Experimental details}
\label{app:experimental-details}

\subsection{Compute resources}

Experiments were run on a compute cluster using SLURM. Each job was allocated a single node with one NVIDIA Tesla V100 GPU (32\,GB VRAM, PCIe), 8 CPU cores, and 32\,GB RAM. Cluster nodes have dual-socket Intel Xeon CPUs (40 logical cores per node, 192\,GB RAM total).

For 5{,}000 graphs, preprocessing requires 452.2s for Hodge, 6.0s for LapPE, 14.4s for RWPE, and 12.9s for HOPSE. Once the encodings are cached, matched Graph Transformers require 1.6--1.7s per epoch and approximately 57.6\,MB peak GPU memory for all encodings; cache storage is $O(nd_{\rm PE})$. Thus the downstream per-epoch training time is effectively equivalent across encodings: Hodge's additional overhead is incurred only once during preprocessing, while yielding the substantial performance improvements reported in Tables~\ref{tab:cliques}--\ref{tab:zinc}. CPU Hodge preprocessing grows from 3.1s at $n=10$ to 1135.8s at $n=70$ as triangle density rises. Sparse or partial eigensolvers are a promising direction for reducing this preprocessing cost on dense complexes.

\ifcamerareadyonly{Use was made of computational facilities purchased with funds from the National Science Foundation (CNS-1725797) and administered by the Center for Scientific Computing (CSC). The CSC is supported by the California NanoSystems Institute and the Materials Research Science and Engineering Center (MRSEC; NSF DMR 2308708) at UC Santa Barbara.}

\subsection{ZINC}

Dataset: 10,000 train, 1,000 validation, and 1,000 test molecular graphs of up to 38 heavy atoms, from the standard split of \citet{dwivedi2022benchmarking}.

Model: Graph Transformer, $d_{\text{model}}=128$, 2 attention heads, 2 layers, dropout 0.2, linear input projection, global sum-pooling readout. Optimizer: Adam, learning rate $10^{-3}$, L1 (MAE) loss, batch size 32, 300 epochs with best checkpoint restored. All experiments use 5 independent trials per encoding.

Encoding dimensions: LapPE, 20 eigenvectors; RWPE, 20 anchors, 3-step; Hodge, 20 eigenvectors.

\subsection{Synthetic experiments}
Model: Graph Transformer, $d_{\text{model}}=64$, 4 attention heads, 3 layers, dropout 0.1, linear input projection, global sum-pooling readout. Optimizer: Adam, learning rate $10^{-3}$, L1 (MAE) loss, batch size 256, up to 100 epochs with early stopping (patience 20 epochs, best checkpoint restored), 5 independent trials per encoding-task pair.

Encoding dimensions: LapPE, 8 eigenvectors; RWPE, 16 anchors, 3-step; Hodge, 8 eigenvectors, with a size-matched HodgeRandom control.

Dataset generation: a base Erd\H{o}s--R\'enyi graph $G(n,p)$ is perturbed by degree-preserving double-edge swaps, 2 per step, with a new graph emitted every 5 swaps. Task-specific parameters:

On the additional four-cycle-counting diagnostic, five-trial validation MAE is $81.089800\pm0.591537$ for HOPSE-M-C, $97.780400\pm0.265618$ for LapPE, $97.510200\pm0.178438$ for RWPE, and $\mathbf{52.207437\pm7.587771}$ for Hodge.

\begin{table}[h]
\centering
\begin{tabular}{lccc}
\toprule
Task & $n$ & $p$ & Train / eval graphs \\
\midrule
Four-clique counting & $[18,22]$ & $[0.3,0.4]$ & 5{,}000 / 1{,}000 \\
Hodge ablation (single-triangle-edge fraction) & $[18,22]$ & $[0.3,0.4]$ & 5{,}000 / 1{,}000 \\
\bottomrule
\end{tabular}
\caption{Synthetic dataset generation parameters by task.}
\label{tab:synth-params}
\end{table}

\subsection{Fixed skeleton with varying filled faces}

To isolate genuinely higher-order input, we construct examples with the same 1-skeleton but different selections of filled 2-simplices. Graph positional encodings and all statistics computed solely from the skeleton are therefore identical across examples; only $B_2$ changes. The targets measure the organization of the filled faces through their number of connected components and the fraction belonging to the largest component. Across both targets, graph-only methods remain near mean-predictor performance, whereas the Hodge encoding substantially reduces prediction error (see Table~\ref{tab:fixed-skeleton}). Because the pairwise graph is fixed, this performance gap shows in direct and easily interpretable terms that the lift--encode--project pipeline can transmit higher-order information supplied independently of the graph skeleton.

\begin{table}[t]
\centering
\scriptsize
\begin{tabular}{lcc}
\toprule
Encoding & Num.\ components & Largest-comp.\ fraction \\
\midrule
No PSE & $1.037\pm0.0003$ & $0.131\pm0.0002$ \\
RWPE & $1.036\pm0.0003$ & $0.130\pm0.0002$ \\
LapPE & $1.036\pm0.0003$ & $0.131\pm0.0002$ \\
\midrule
Hodge & $\mathbf{0.307\pm0.013}$ & $\mathbf{0.057\pm0.010}$ \\
\bottomrule
\end{tabular}
\caption{Fixed-skeleton validation MAE (mean $\pm$ std over 5 trials) on both targets. All examples share one 1-skeleton; only $B_2$ varies. The first block uses graph-only information and the second uses the supplied filled faces.}
\label{tab:fixed-skeleton}
\end{table}

\section{Complete proofs and extensions}
\label{app:proofs}

\subsection{Exact projection onto scalar spectral filters}
\label{app:exact-projection}

Let $L=U\Lambda U^\top$ have simple spectrum and define $\widehat M=U^\top MU$.  The Frobenius norm satisfies $\|Q_1AQ_2\|_F=\|A\|_F$ for orthogonal $Q_1,Q_2$.  This follows from \[ \|QA\|_F^2 =\operatorname{tr}((QA)^\top QA) =\operatorname{tr}(A^\top Q^\top QA) =\operatorname{tr}(A^\top A), \] and the right-multiplication statement is analogous.  Consequently, \begin{align*} \|M-g(L)\|_F^2 &=\left\|U^\top\bigl(M-g(L)\bigr)U\right\|_F^2\\ &=\left\|\widehat M- \operatorname{diag}\bigl(g(\lambda_1),\ldots,g(\lambda_n)\bigr)\right\|_F^2\\ &=\sum_i|\widehat M_{ii}-g(\lambda_i)|^2 +\sum_{i\neq j}|\widehat M_{ij}|^2. \end{align*} Because $\lambda_1,\ldots,\lambda_n$ are distinct, arbitrary prescribed values at these finitely many points can be realized by a scalar function $g$.  More strongly, Lagrange interpolation realizes them with a polynomial of degree at most $n-1$.  Taking $g(\lambda_i)=\widehat M_{ii}$ eliminates the first sum, which proves \[ \inf_g\|M-g(L)\|_F^2 =\sum_{i\neq j}|\widehat M_{ij}|^2 =\sum_{i\neq j}|u_i^\top Mu_j|^2. \] The minimum is zero exactly when $\widehat M$ is diagonal.

It remains to relate diagonality to commutation.  Direct calculation gives \[ U^\top\comm{L}{M}U =\comm{\Lambda}{\widehat M}, \qquad \comm{\Lambda}{\widehat M}_{ij} =(\lambda_i-\lambda_j)\widehat M_{ij}. \] Since the spectrum is simple, $\lambda_i-\lambda_j\neq0$ whenever $i\neq j$.  Thus $\comm{L}{M}=0$ exactly when $\widehat M$ is diagonal.  In that case, interpolation supplies a $g$ for which $M=g(L)$, completing the proof of Theorem~\ref{thm:main}.

\subsection{A quantitative separation certificate}
\label{app:quantitative}

The preceding identity also yields \[ \|\comm{L}{M}\|_F^2 =\sum_{i\neq j}(\lambda_i-\lambda_j)^2 |\widehat M_{ij}|^2. \] Since $|\lambda_i-\lambda_j|\leq \lambda_{\max}(L)-\lambda_{\min}(L)$, \begin{align*} \|\comm{L}{M}\|_F^2 &\leq \bigl(\lambda_{\max}-\lambda_{\min}\bigr)^2 \sum_{i\neq j}|\widehat M_{ij}|^2\\ &=\bigl(\lambda_{\max}-\lambda_{\min}\bigr)^2 \inf_g\|M-g(L)\|_F^2. \end{align*} Therefore \begin{equation} \inf_g\|M-g(L)\|_F \geq \frac{\|\comm{L}{M}\|_F} {\lambda_{\max}(L)-\lambda_{\min}(L)}. \label{eq:comm-bound} \end{equation} For a nontrivial graph Laplacian, $\lambda_{\min}(L)=0$ and $\lambda_{\max}(L)>0$.  Hence a nonzero commutator not only proves inequality with every scalar filter; it certifies a strictly positive Frobenius-distance gap from the whole family.

\subsection{Full calculation for the triangle example}
\label{app:triangle-proof}

Assume that the lift contains one filled triangle $\tau$.  Then $C_2$ has a single column $b_\tau$, with one nonzero entry for each of the three edges of $\tau$.  Therefore \[ M_\triangle=C_1b_\tau b_\tau^\top C_1^\top =(C_1b_\tau)(C_1b_\tau)^\top. \] Every triangle vertex is incident to exactly two triangle edges, while every other vertex is incident to none of them.  Thus $C_1b_\tau=2z_\tau$ and \[ M_\triangle=4z_\tau z_\tau^\top. \]

We use the following elementary rank-one fact.  For symmetric $L$ and nonzero $z$, \begin{equation} \comm{L}{zz^\top}=0 \quad\Longleftrightarrow\quad z\text{ is an eigenvector of }L. \label{eq:rank-one-fact} \end{equation} For the forward direction, apply $Lzz^\top=zz^\top L$ to $z$: \[ Lz\,\|z\|_2^2=z(z^\top Lz). \] Hence $Lz$ is proportional to $z$.  Conversely, if $Lz=\lambda z$, symmetry of $L$ gives $Lzz^\top=\lambda zz^\top=zz^\top L$.

Now choose an outside vertex $v\notin\tau$ adjacent to at least one triangle vertex.  Since $(z_\tau)_v=0$, \[ (Lz_\tau)_v =d_v(z_\tau)_v-\sum_uA_{vu}(z_\tau)_u =-\sum_{u\in\tau}A_{vu} =-|N(v)\cap\tau|<0. \] Thus $Lz_\tau$ cannot be proportional to $z_\tau$.  Equation \eqref{eq:rank-one-fact} implies $\comm{L}{z_\tau z_\tau^\top}\neq0$, and hence $\comm{L}{M_\triangle}\neq0$.  The general separation implication in Theorem~\ref{thm:main} gives $M_\triangle\neq g(L)$ for every scalar $g$; Equation~\eqref{eq:comm-bound} additionally gives a positive quantitative gap.

\subsection{Repeated eigenvalues}
\label{app:repeated}

Simple spectrum keeps the main theorem easy to state, but noncommutation remains a valid separation certificate without it.  Let $\{\mu_r\}_{r=1}^q$ be the distinct eigenvalues of $L$, and let $P_r$ be the orthogonal projector onto the eigenspace of $\mu_r$.  Every scalar filter has the form \[ g(L)=\sum_{r=1}^q g(\mu_r)P_r. \] Within a repeated eigenspace, $g(L)$ must act as one scalar times the identity.  For a symmetric $M$, the best scalar for that block is \[ \alpha_r= \frac{\operatorname{tr}(P_rMP_r)}{\operatorname{rank}(P_r)}. \] The approximation residual therefore contains two kinds of terms: the cross-eigenspace blocks $P_rMP_s$ for $r\neq s$, and the non-scalar part $P_rMP_r-\alpha_rP_r$ within each repeated eigenspace.  Consequently, $\comm{L}{M}\neq0$ is still sufficient to prove $M\neq g(L)$.  The converse is no longer automatic: $M$ may commute with $L$ while acting non-scalar within a repeated eigenspace.

\subsection{Generality and scope}
\label{app:scope}

The argument requires only a symmetric graph reference operator $L$ and a symmetric lifted node operator $M$.  It therefore covers a wide range of operators induced by simplicial complexes, cell complexes, hypergraphs, motif incidence structures, and related lifts; the triangle operator in Corollary~\ref{cor:triangle} is merely the simplest illustration.  The universal direction---$\comm{L}{M}\neq0$ implies separation from every $g(L)$---also holds when $L$ has repeated eigenvalues.  Normalized reversible walks can be handled through their symmetric similarity transforms or the corresponding weighted inner product.  Fully directed, nonsymmetric operators require the analogous nonsymmetric formulation.

The theorem establishes an operator-level representational separation.  A finite positional encoding inherits this advantage to the extent that its chosen statistics retain the distinguishing operator information; truncation, pooling, and the downstream architecture determine how much is used in a particular implementation.  The controlled experiments complement the theorem by measuring this effect after the complete encoding and learning pipeline.

Finally, deterministic lifts need not introduce external data to create a new representation.  Their benefit is precisely that they reorganize the graph into an operator with cross-frequency interactions outside the scalar graph spectral-filter family.

\end{document}